\documentclass{article}


\usepackage[preprint,nonatbib]{include/neurips_2025}




\usepackage[utf8]{inputenc} 
\usepackage[T1]{fontenc}    
\usepackage{hyperref}       
\usepackage{url}            
\usepackage{booktabs}       
\usepackage{amsfonts}       
\usepackage{nicefrac}       
\usepackage{microtype}      
\usepackage{xcolor}         

\usepackage{microtype}
\usepackage{graphicx}
\usepackage{subfigure}
\usepackage{booktabs}
\usepackage{hyperref}

\usepackage{amsmath}
\usepackage{amssymb}
\usepackage{mathtools}
\usepackage{amsthm}

\usepackage[capitalize,noabbrev]{cleveref}

\theoremstyle{plain}
\newtheorem{theorem}{Theorem}[section]

\theoremstyle{definition}

\theoremstyle{remark}

\usepackage[textsize=tiny]{todonotes}

\DeclareMathOperator*{\argmax}{arg\,max}
\DeclareMathOperator*{\argmin}{arg\,min}

\usepackage{booktabs}       
\usepackage{multicol}
\usepackage{multirow}
\usepackage{xspace}
\usepackage{enumitem}

\usepackage{algorithm}
\usepackage{algorithmic}
\newcommand{\method}{CACTUS\xspace}
\newcommand{\methodfull}{Compression Aware Certified Training Using network Sets}

\newcommand{\compressed}{C_{\psi}^{f_{\theta}}}

\newcommand{\speccompressed}{C^{f_{\theta}}_{\psi_{\delta}}}
\newcommand{\din}{d_{\text{in}}}
\newcommand{\dout}{d_{\text{out}}}

\newcommand{\vect}[1]{\boldsymbol{#1}}

\definecolor{mgreen}{rgb}{0.0, 0.7, 0.1}

\title{Compression Aware Certified Training}

%

  \author{%
 Changming Xu \\
  Department of Computer Science\\
  University of Illinois Urbana-Champaign\\
  Urbana, IL 61801 \\
  \texttt{cx23@illinois.edu} \\
  \And
  Gagandeep Singh \\
  Department of Computer Science\\
  University of Illinois Urbana-Champaign\\
  Urbana, IL 61801 \\
  \texttt{ggnds@illinois.edu} \\
}

\begin{document}

\maketitle

\begin{abstract}
    Deep neural networks deployed in safety-critical, resource-constrained environments must balance efficiency and robustness. Existing methods treat compression and certified robustness as separate goals, compromising either efficiency or safety. We propose \method (\methodfull), a general framework for unifying these objectives during training. \method models maintain high certified accuracy even when compressed. We apply \method for both pruning and quantization and show that it effectively trains models which can be efficiently compressed while maintaining high accuracy and certifiable robustness. \method achieves state-of-the-art accuracy and certified performance for both pruning and quantization on a variety of datasets and input specifications.
\end{abstract}

\section{Introduction}\label{sec:introduction }
Deep neural networks (DNNs) are widely adopted in safety-critical applications such as autonomous driving~\cite{bojarski2016end_DUP, shafaei2018uncertainty}, medical diagnosis~\cite{AMATO201347, kononenko2001machine}, and wireless systems~\cite{wireless1, wireless2} due to their state-of-the-art accuracy. However, deploying these models in resource-constrained environments necessitates model compression to satisfy strict computational, memory, and latency requirements. Furthermore, using machine learning in safety-critical environments requires networks that are provably robust. Current compression methods, including pruning and quantization, effectively reduce model complexity but frequently degrade robustness, either by discarding essential features or amplifying adversarial vulnerabilities. Conversely, certified robust training methods~\cite{deepz, crownibp, sabr, taps} predominantly target full-precision models, resulting in a critical research gap: robustly trained models rarely consider compression, while compressed models rarely maintain robustness. In many real-world systems, both efficiency and reliability are non-negotiable.

Most existing approaches treat compression and robustness as independent objectives. Techniques for compression-aware training often overlook certifiable robustness, focusing primarily on reducing model size~\cite{zimmer2022compression}. Similarly, methods that achieve certifiable robustness typically do not account for compression, leading to suboptimal standard and certified accuracy when models are compressed~\cite{vaishnavi2022feasibility}. These limitations force practitioners to choose between deploying larger, resource-intensive models for robustness or sacrificing safety for efficiency. Furthermore, edge devices that leverage compressed networks often face evolving computational needs, necessitating adaptable models that can be efficiently compressed at multiple levels~\cite{francy2024edgeaievaluationmodel}. Therefore, developing training methodologies that produce models adaptable to multi-level compression while maintaining certifiable robustness is crucial for optimal performance in dynamic, resource-constrained environments.

\noindent\textbf{Key Challenges}. Integrating compression and certified robustness into a unified training framework presents unique challenges, as co-optimizing accuracy, compression, and robustness objectives increases the complexity of training. Additionally, common compression techniques such as pruning, quantization, etc use non-differentiable operations (binary masks, rounding) further complicating training.

\noindent\textbf{This work}. We propose \method~(\methodfull), a general framework for unifying certified robustness and compression during training, as highlighted in Figure \ref{fig:overview}. \method~jointly optimizes for accuracy, certified robustness, and compressibility, ensuring that models remain provably robust even after compression. \method-trained models generalize to multiple compression levels without retraining which is valuable for edge systems with fluctuating resource constraints.

\noindent\textbf{Main Contributions}. We list our main contributions below:  

\begin{itemize}[noitemsep, nolistsep,leftmargin=*]
    \item We propose a novel unified compression and robustness training objective and a training method, \method, which optimizes it. We show that \method is flexible and can be used to create networks that remain certifiably robust under pruning and quantization.
    \item We perform an extensive evaluation of \method against existing state-of-the-art approaches for compressing robust models for both pruning and quantization on a variety of different datasets, input specifications, and compression ratios. We show that by integrating compression and robustness into training we can obtain better standard and certified accuracy under a majority of the conditions tested compared to existing baselines.
\end{itemize}

\begin{figure*}[t]
    \centering
\includegraphics[width=1\textwidth]{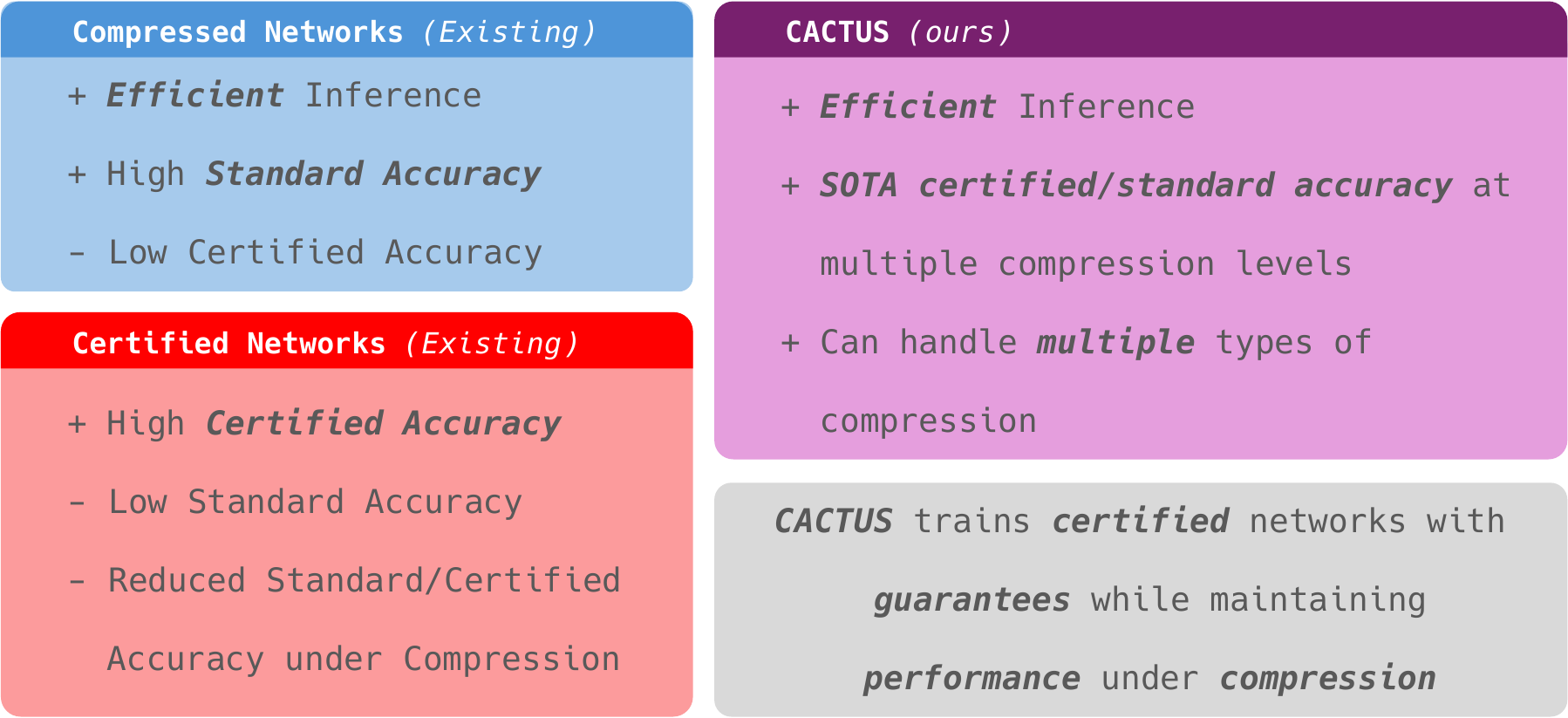}
    \caption{\method is a unified framework for training certified networks under compression.}
    \label{fig:overview}
    \vspace{-10pt}
\end{figure*} 
\section{Background}\label{sec:background}
This section provides the necessary notations, definitions, and background on neural network compression and certified training methods for DNNs.

\noindent \textbf{Notation}. Throughout the rest of the paper we use small case letters ($x, y$) for constants, bold small case letters ($\vect{x}, \vect{y}$) for vectors, capital letters $X, Y$ for functions and random variables, and calligraphed capital letters $\mathcal{X}, \mathcal{Y}$ for sets. 

\subsection{Compression of Neural Networks}
Despite DNNs' effectiveness, their high computational cost and memory footprint can hinder their deployment on edge devices. To address these challenges, researchers have developed a variety of \emph{compression} strategies. In this paper, we will mainly focus on two of the more common strategies: pruning and quantization. More details on both methods in Appendix~\ref{app:background}.

\noindent\textbf{Pruning}. Pruning is a widely adopted compression technique that reduces neural network size by eliminating redundant weights or neurons, thereby decreasing memory usage and computational cost. The lottery ticket hypothesis posits that within dense, randomly-initialized networks exist sparse subnetworks—termed "winning tickets" that can be trained to be comparable to the original network~\cite{frankle2019lotterytickethypothesisfinding}. Pruning methods are generally categorized based on: (1) unstructured pruning removes individual weights, while structured pruning eliminates entire structures like neurons or filters; and (2) global pruning considers the entire network for pruning decisions, whereas local pruning applies pruning within individual layers (3) whether they finetune after pruning or not \cite{10643325,fladmark2023exploringperformancepruningmethods}. 




\noindent\textbf{Quantization}. Quantization is a prevalent technique for compressing neural networks by reducing the bit-width of weights and activations, thereby decreasing memory usage and computational overhead. By substituting high-precision floating-point representations (typically 32-bit) with lower-precision formats, such as 8-bit integers, quantization can significantly accelerate inference, particularly on hardware optimized for integer operations \cite{wu2020integerquantizationdeeplearning}.

\subsection{Adversarial Attacks, Verification, and Certified Training}
Given an input-output pair $(\mathbf{x},y) \in \mathcal{X} \subseteq \mathbb{R}^{\din}\times\mathbb{Z}$, and a classifier $f: \mathbb{R}^{\din} \to \mathbb{R}^{\dout}$ which gives score $f_k(\mathbf{x})$ for class $k$ (let $\hat{f}(\mathbf{x}) = \argmax_k f_k(x)$). An additive perturbation, $\mathbf{v} \in \mathbb{R}^{\din}$, is adversarial for $f$ on $\mathbf{x}$ if $\hat{f}(\mathbf{x}) = y$ and $\hat{f}(\mathbf{x} + \mathbf{v}) \neq y$. Let $\mathcal{B}_p(\alpha, \beta) = \{x |\text{ } ||x-\alpha||_p \leq \beta\}$ be an $l_p$-norm ball. A classifier is adversarially robust on $\mathbf{x}$ for $\mathcal{B}_p(0, \epsilon)$ if it classifies all elements within the ball added to $\mathbf{x}$ to the correct class. Formally, $\forall \mathbf{v} \in \mathcal{B}_p(0, \epsilon). \hat{f}(\mathbf{x} + \mathbf{v}) = y$. In this paper, we focus on $l_\infty$-robustness, i.e. balls of the form $\mathcal{B}_\infty(\mathbf{x}, \epsilon) \coloneqq \{\mathbf{x'} = \mathbf{x} + \mathbf{v} | \|\mathbf{v}\|_\infty \leq \epsilon\}$, so will drop the subscript $\infty$. 

\noindent\textbf{Verification of NNs}. Given a neural network $f_\theta$ and an input $\mathbf{x}$ with true label $y$, \emph{certified robustness} guarantees that the network's prediction remains unchanged for all inputs within a specified perturbation bound $\mathcal{B}(\mathbf{x}, \epsilon)$. Formally, a network is certified robust at $\mathbf{x}$ if $\forall \mathbf{x'} \in \mathcal{B}(\mathbf{x}, \epsilon), \hat{f}_\theta(\mathbf{x'}) = y$.

Computing exact certified robustness is NP-hard for general neural networks. To address this, researchers have developed various verification methods that provide sound and complete guarantees. One popular approach is \emph{Interval Bound Propagation} (IBP), which propagates interval bounds through the network to compute sound over-approximations of the network's output range. While IBP is computationally efficient, it can be overly conservative in its bounds. More precise methods like $\alpha\beta$-CROWN~\cite{bcrown} and DeepPoly~\cite{deeppoly} exist but are more computationally expensive.

\noindent\textbf{Training for Robustness}. We can get robustness by minimize the expected worst-case loss due to adversarial examples leading to the following optimization problem \cite{sabr, taps, pgd}: 

\begin{equation}\label{eq:trainrob}
    \mathbf{\theta} = \argmin_{\mathbf{\theta}} \mathop{\mathbb{E}}_{(\mathbf{x}, y) \in \mathcal{X}}\left[ \max_{\mathbf{x'} \in \mathcal{B}(\mathbf{x}, \epsilon)} \mathcal{L}(f_\theta(\mathbf{x}'), y)\right]
\end{equation}

Where $\mathcal{L}$ is a loss over the output of the DNN. Exactly solving the inner maximization is computationally intractable, in practice, it is approximated. Underapproximating the inner maximization is typically called adversarial training, a popular technique for obtaining good empirical robustness \cite{pgd}, but these techniques do not give formal guarantees and are potentially vulnerable to stronger attacks \cite{tramer2020adaptive}. We will focus on the second type of training, called certified training which overapproximates the inner maximization as it provides better provable gaurantees on robustness.

\noindent\textbf{Certified Training}. The IBP verification framework above adapts well to training. The \textsc{Box} bounds on the output can be encoded nicely into a loss function:

\begin{equation}
    \mathcal{L}_{\text{IBP}} (\mathbf{x}, y, \epsilon) \coloneqq \ln\left(1 + \sum_{i\neq y} e^{\overline{\mathbf{o}}_i - \underline{\mathbf{o}}_y} \right)
\end{equation}

Where $\overline{\mathbf{o}}_i$ and $\underline{\mathbf{o}}_y$ represent the upper bound on output dimension $i$ and lower bound of output dimension $y$. To address the large approximation errors arising from \textsc{Box} analysis, SABR \cite{sabr}, a SOTA certified training method, obtains better standard and certified accuracy by propagating smaller boxes through the network. They do this by first computing an adversarial example, $\mathbf{x}' \in \mathcal{B}(\mathbf{x}, \epsilon - \tau)$ in a slightly truncated $l_\infty$-norm ball. They then compute the IBP loss on a small ball around the adversarial example, $\mathcal{B}(\mathbf{x}', \tau)$, rather than on the entire ball, $\mathcal{B}(\mathbf{x}, \epsilon)$, where $\tau \ll \epsilon$. 

\begin{equation}\label{eq:sabr}
    \mathcal{L}_{\text{SABR}} (\mathbf{x}, y, \epsilon, \tau) \coloneqq \max_{x' \in \mathcal{B}(\mathbf{x}, \epsilon - \tau)} \mathcal{L}_{\text{IBP}} (\mathbf{x}', y, \tau)
\end{equation}

Even though this is not a sound approximation of adversarial robustness, SABR accumulates fewer approximation errors due to its more precise \textsc{Box} analysis; thus, reduces overregularization and improves standard/certified accuracy. 


Finally, we introduce Adversarial Weight Perturbation (AWP), a robust training method which we use as a differentiable approximation from quantization, more details in Section~\ref{sec:alg}.

\noindent\textbf{Adversarial Weight Perturbation}. AWP improves adversarial robustness by perturbing the weights of a neural network during training~\cite{awp}. Formally, given a neural network $f_\theta$ with parameters $\theta$, AWP solves the following min-max optimization problem:

\begin{equation}
\min_{\theta} \mathbb{E}_{(\mathbf{x}, y)} \left[ \max_{\|\delta\|_2 \leq \rho} \mathcal{L}(f_{\theta+\delta}(\mathbf{x}), y) \right]
\end{equation}

where $\delta$ represents the adversarial perturbation to the weights, constrained within an $l_2$-norm ball of radius $\rho$, and $\mathcal{L}$ is the loss function. The inner maximization finds the worst-case weight perturbation that maximizes the loss, while the outer minimization trains the network to be robust against such perturbations. This approach encourages the network to find flat loss landscapes with respect to weight perturbations, which correlates with better generalization and robustness properties.
\section{Related Work}

\noindent\textbf{Certified Training}. \cite{shiibp, mirman2018differentiable, colt, crownibp} are well-known approaches for certified training of standard DNNs. More recent works \cite{sabr, xiaotraining, fantraining} integrate adversarial and certified training techniques to achieve state-of-the-art performance in both robustness and clean accuracy. \cite{de2023expressive} show that expressive losses obtained via convex combinations of adversarial and IBP loss gives state-of-the-art performance.

\noindent\textbf{Pruning}. \cite{lecun1990optimal, hassibi1993optimal} pruned parameters based on their influence on the loss (using second-order information). \cite{han2015learning, han2016deep} iteratively pruned the smallest-magnitude weights are then fine-tuned to recover accuracy. \cite{louizos2018learning} uses regularizers that push weights to zero. Recent works like \cite{frantar2023sparsegpt} show that GPT-scale transformers can be pruned to over 50\% sparsity with negligible loss in performance.

\noindent\textbf{Pruning \& Certified Training}. \cite{nrsloss} investigates the effects of pruning on certified robustness and propose a novel stability-based pruning method, NRSLoss, which explicitly regularizes neuron stability and significantly boosts certified robustness. \cite{hydra} introduce HYDRA, a pruning framework which uses empirical risk minimization problem guided by robust training goals.

\noindent\textbf{Quantization}. \cite{hubara2016binarized, rastegari2016xnor} show that using binary weights and activations greatly reduces memory and compute at some cost to accuracy. \cite{jacob2018quantization} introduced 8-bit integer weights and activations for improved accuracy retention. Subsequent research introduced more advanced quantization-aware training techniques that learn optimal scaling parameters or clipping thresholds during training \cite{choi2018pact, kramer2019learned}, enabling even 4-bit or lower precision networks to approach full-precision accuracy. Recent work on large models has shown that 8-bit matrix multiplication can be applied to 175B-parameter transformers without degrading perplexity \cite{dettmers2022llm}, and GPTQ \cite{frantar2023gptq} demonstrates that generative language models can be compressed to 3–4 bit weights post-training with negligible accuracy loss. 

\noindent\textbf{Quantization \& Certified Training}. \cite{qaibp} introduce Quantization-Aware Interval Bound Propagation (QA-IBP), a novel method for training and certifying the robustness of quantized neural networks (QNNs). \method does not assume specific quantization patterns rather it leverages insights from adversarial weight perturbation \cite{awp} to generate networks with flatter loss landscapes relative to the weight parametrization.
\section{\method}\label{sec:technicalcontribution}
In this section, we define a joint training objective for robustness and compression then introduce \method as a way to optimize this objective.

\subsection{Compression and Robustness Aware Training Objective}\label{sec:probForm}

Equation \ref{eq:trainrob} gives the robustness training objective. Given parameterization $\theta$, classifier $f_\theta: \mathbb{R}^{\din}\to \mathbb{R}^{\dout}$ represents a DNN parameterized by $\theta$, let the size of the DNN (number of parameters) be $d_{f} = |\theta|$. Given a compression parameterization $\psi$, let $\compressed: \mathbb{R}^{\din}\to \mathbb{R}^{\dout}$ be a compressed model derived from $f_\theta$. For example, for pruning,  we can have $\psi\in \{0,1\}^{d_f}$ representing a binary mask on $\theta$, in other words, $\compressed = f_{\theta} \odot {\psi}$ where $\odot$ is the Hadamard product. Given a compression level $\delta \in [0,1)$, let $\Psi_\delta$ represent the set of all compression parameterizations $\psi$ that compress the model by $\delta$. In our pruning example, $\frac{1}{d_f}\sum_{i=1}^{d_f}\psi_i = 1 - \delta$. Note that by this definition, the network is uncompressed when $\delta = 0$. Given a maximum compression ratio, $\delta_{max}$, we can now define the compression and robustness aware training objective as finding $\theta$ that minimizes

\begin{equation}\label{eq:comprobtrain}
\begin{split}
    \mathbf{\theta} = \argmin_{\mathbf{\theta}} \mathop{\mathbb{E}}_{\delta\in[0,\delta_{max})}\bigg[\min_{\psi \in \Psi_{\delta}}\bigg(\mathop{\mathbb{E}}_{(\mathbf{x}, y) \in \mathcal{X}}\bigg[ \max_{\mathbf{x'} \in \mathcal{B}(\mathbf{x}, \epsilon)} \mathcal{L}(\compressed(\mathbf{x}'), y)\bigg]\bigg)\bigg]
\end{split}
\end{equation}

Here, the inner maximization searches for the compressed network with given compression ratio, $\delta$, that gives the smallest expected loss over an adversarially attacked dataset. Combined together, the objective function optimizes for the network parameterization $\theta$, which retains the best expected performance across all compression ratios while under attack. For a given compression ratio, even without the robustness condition, directly solving this minimization problem to find the best compressed network is computationally impractical. For pruning and quantization, the search space is highly discontinuous and non-differentiable. Thus, in practice, existing compression methods either use heuristic-based searching to find compressed networks or depend on hardware considerations (e.g. floating-point precision support) to limit the search space substantially \cite{PALAKONDA2025126326, 8954415, zandonati2023optimalcompressionjointpruning}. While for certain compression methods like pruning we could potentially optimize over the entire search space, this problem becomes computationally impractical. Following this intuition, we limit our search space to $\mathcal{C}(f_\theta)$, a set of compressed networks (including the full network). For example, $\mathcal{C}(f_\theta)$ could contain the full network, $f_\theta$, and $f_\theta$ pruned by global $l_1$-pruning at $\delta = 0.7$. We can now modify the above optimization problem to,

\begin{equation}\label{eq:comprobsimptrain}
    \mathbf{\theta} = \argmin_{\mathbf{\theta}} \frac{1}{|\mathcal{C}(f_\theta)|}\sum_{\psi_{\delta}\in\mathcal{C}(f_\theta)}\left(\mathop{\mathbb{E}}_{(\mathbf{x}, y) \in \mathcal{X}}\left[ \max_{\mathbf{x'} \in \mathcal{B}(\mathbf{x}, \epsilon)} \mathcal{L}(\speccompressed(\mathbf{x}'), y)\right]\right)
\end{equation}

Recall that although solving Equation~\ref{eq:trainrob} exactly is computationally impractical, we can overapproximate the inner maximization using techniques like IBP to create tractable certifiably robust training algorithms. \method overapproximates Equation~\ref{eq:comprobsimptrain} in a similar manner.

\subsection{\method Loss}
For a given network, $\speccompressed \in \mathcal{C}(f_\theta)$, and data point, $\mathbf{x}, \mathbf{y} \in \mathcal{X}$, we can define the loss as

\begin{equation}\label{eq:indnetwork}
    \lambda\mathcal{L}_{std}(\speccompressed\left(\mathbf{x}), \mathbf{y}\right) + (1-\lambda)\mathcal{L}_{cert}(\speccompressed\left(\mathbf{x}), \mathbf{y}\right)
\end{equation}

Although \method is general for different standard and certified loss functions. For the remainder of this paper, we will be using cross-entropy for standard loss and SABR for certified loss (Equation~\ref{eq:sabr}). \method's development is orthogonal to general certified training techniques. Here, while we could use IBP \cite{shiibp} loss or losses due to more complicated abstract domains \cite{deepz, deeppoly} we leverage SABR's insight that using smaller unsound IBP boxes around adversarial examples leads to less approximation errors during bound propagation and thus higher standard and certified accuracy. To overapproximate Equation~\ref{eq:comprobsimptrain}, we can now define \method loss as

\begin{equation}\label{eq:coralloss}
\mathcal{L}_{\method} = \frac{1}{|\mathcal{C}(f_\theta)|}\sum_{\psi_\delta \in \mathcal{C}(f_\theta)}\lambda\mathcal{L}_{std}\big(\speccompressed(\mathbf{x}), \mathbf{y}\big) + (1-\lambda)\mathcal{L}_{cert}\big(\speccompressed(\mathbf{x}), \mathbf{y}\big)
\end{equation}

Borrowing insight from existing works on certified training, we balance certified loss with standard loss~\cite{sabr, taps, shiibp}. Here, we are assuming that we can propagate the gradient on $\speccompressed$ back to $f_{\theta}$. While for some compression techniques, such as pruning, this is possible, some compression algorithms (quantization) perform non-differentiable transforms (such as rounding). In the following sections, we show how we can use differentiable analogs to train.

\subsection{\method Training}\label{sec:alg}

Algorithm~\ref{alg:cactus} outlines \method's procedure for co-optimizing compression and robustness. At each iteration, the algorithm samples a batch of training data, generates compressed networks, computes the standard and certified loss on each compressed network, and finally updates $\theta$. 

\begin{algorithm}
	\caption{\method Training}\label{alg:cactus}
        
	\begin{algorithmic}[1]
        \REQUIRE Training data $\mathcal{X}$, compression set $\mathcal{C}(f_\theta)$, robustness radius $\epsilon$, loss weights $\lambda$
        \STATE  Initialize $\theta$
        \FOR{each training iteration $t=1,2,\dots,T$}
            \FOR{each batch $(\mathbf{x}, \mathbf{y}) \subset \mathcal{X}$}
                \STATE Refresh $\mathcal{C}(f_\theta)$ for current $\theta$:
                \STATE \hspace{\algorithmicindent} For pruning: Update pruning masks based on current weights
                \STATE \hspace{\algorithmicindent} For quantization: Update quantization levels based on weight distributions
                \STATE $\mathcal{L}_{\method} = 0$
                \FOR{$\psi_\theta \in \mathcal{C}(f_\theta)$}
                    \STATE Compute compressed network $\speccompressed$
                    \STATE Calculate $\mathcal{L}_{std}$ and $\mathcal{L}_{cert}$
                    \STATE $\mathcal{L}_{\method} += \lambda\mathcal{L}_{std} + (1-\lambda)\mathcal{L}_{cert}$
                    \STATE Update $\theta$
                \ENDFOR
            \ENDFOR
        \ENDFOR
        \STATE \textbf{return} $\theta$
        \end{algorithmic}
\end{algorithm}

During each batch in Algorithm \ref{alg:cactus} it is important to refresh the compressed networks to ensure that gradient updates can be accurately propagated (i.e. compressed networks are recomputed). Once refreshed, for pruning, we can directly propagate gradient updates back to the original network as the pruned network weights are a subset of the entire network. However, since quantization is not differentiable, we need an differentiable alternative. We use \emph{adversarial weight perturbation} (AWP)~\cite{awp} as a differentiable proxy for quantization. Below we explain our choice of AWP.
  
\noindent\textbf{Adversarial Weight Perturbation}. When quantizing weights to a fixed-point format with step size $q_{step}$, the quantization error for each weight is bounded by $q_{step}/2$. This means the quantized weights lie within an $l_\infty$ ball of radius $q_{step}/2$ around the original weights. AWP directly optimizes for robustness against such bounded perturbations. Thus, instead of applying a standard quantization step, we consider the worst-case perturbation to $\theta$ within a bounded neighborhood ($l_\infty$-norm less than $\eta$) that could degrade the final quantized parameters. Formally, for each training step, we solve

\begin{equation}\label{eq:awp}
\Delta^* = \argmax_{\{\Delta | \|\Delta\|_\infty \le \eta\}} \,\mathcal{L}_{\mathrm{std}}\!\Bigl(f_{\theta + \Delta}(x),\,y\Bigr) + \,\mathcal{L}_{\mathrm{cert}}\!\Bigl(f_{\theta + \Delta}(x),\,y\Bigr)
\end{equation}

where $\eta$ defines the magnitude of allowable weight perturbations. This objective can be approximated efficiently via gradient descent, $\ell_{\mathrm{comp}}(\cdot)$. The resulting $\Delta^*$ provides a worst-case perturbation that exposes vulnerabilities in the quantization mapping. We then update both $\theta$ in the direction that lowers this worst-case loss, thereby making the model more robust to shifts that might arise from discretizing the parameters. More formally, we have

\begin{theorem}\label{thm:awp}
Given network $f_\theta$, loss functions $\mathcal{L}_{\mathrm{std}}, \mathcal{L}_{\mathrm{cert}}$, perturbation magnitude $\eta$ and $\Delta$ computed by Equation \ref{eq:awp}. Let $Q(f_\theta, q_{step})$ be $f_\theta$ quantized with step size $q_{step}$. If $q_{step} \leq \eta$, then 
$$\mathcal{L}_{\mathrm{std}}\!\Bigl(f_{\theta + \Delta}(x),\,y\Bigr) + \,\mathcal{L}_{\mathrm{cert}}\!\Bigl(f_{\theta + \Delta}(x),\,y\Bigr) \geq \mathcal{L}_{\mathrm{std}}\!\Bigl(Q(f_\theta, q_{step})(x),\,y\Bigr) + \,\mathcal{L}_{\mathrm{cert}}\!\Bigl(Q(f_\theta, q_{step})(x),\,y\Bigr)$$
\end{theorem}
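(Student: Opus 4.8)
The plan is to prove the inequality by a direct feasibility argument: I will exhibit the quantized network as a particular point in the constraint set over which $\Delta$ is the maximizer, so that optimality of $\Delta$ immediately yields the bound. Concretely, since $Q(f_\theta, q_{step})$ quantizes only the weights, I would first write the quantized model as a weight-perturbed copy of $f_\theta$, namely $f_{\theta + \Delta_Q}$ where $\Delta_Q \coloneqq Q(\theta) - \theta$ is the componentwise rounding residual. This reduction is the crucial modeling step; it requires that quantization act as an additive shift of the parameters (weight-only quantization, with the grid covering the active weight range so that no clipping residual beyond the rounding error is introduced), which is exactly the setting motivated in the paragraph preceding the theorem.

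Next I would bound the size of this residual. Uniform quantization with step $q_{step}$ rounds each coordinate to the nearest grid point, so each coordinate of $\Delta_Q$ has magnitude at most $q_{step}/2$, giving $\|\Delta_Q\|_\infty \le q_{step}/2$. Combining this with the hypothesis $q_{step} \le \eta$ yields $\|\Delta_Q\|_\infty \le q_{step}/2 \le \eta$, so $\Delta_Q$ lies in the feasible set $\{\Delta : \|\Delta\|_\infty \le \eta\}$ of Equation \ref{eq:awp}. (Note the hypothesis is in fact conservative: $q_{step}/2 \le \eta$, i.e.\ $q_{step} \le 2\eta$, already suffices for feasibility.)

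Finally I would invoke optimality. By definition $\Delta$ (taken, as Equation \ref{eq:awp} specifies, to be the exact maximizer) maximizes $g(\Delta') \coloneqq \mathcal{L}_{\mathrm{std}}(f_{\theta+\Delta'}(x), y) + \mathcal{L}_{\mathrm{cert}}(f_{\theta+\Delta'}(x), y)$ over the $\eta$-ball. Since $\Delta_Q$ is a feasible point of this maximization, $g(\Delta) \ge g(\Delta_Q)$. Rewriting $g(\Delta_Q)$ using $f_{\theta+\Delta_Q} = Q(f_\theta, q_{step})$ gives exactly the right-hand side of the claim, completing the argument. The proof is therefore short once the reduction is in place; the only genuine obstacle is justifying that first modeling step --- that the quantized network coincides with an $\eta$-bounded additive weight perturbation of $f_\theta$. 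If activations were also quantized, or if clipping pushed some weights outside the representable range, $Q(f_\theta, q_{step})$ would no longer equal $f_{\theta+\Delta_Q}$ for a $\Delta_Q$ with $\|\Delta_Q\|_\infty \le q_{step}/2$, and the feasibility step would fail; I would make these assumptions explicit in the statement or in an accompanying remark.
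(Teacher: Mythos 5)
Your proof is correct and follows essentially the same route as the paper's: exhibit $\Delta_Q = Q(\theta) - \theta$ as a feasible point of the AWP maximization via the $q_{step}/2$ rounding bound, then invoke optimality of the exact maximizer. Your observation that $q_{step} \le 2\eta$ already suffices is also consistent with the paper, whose appendix version of the theorem states exactly that weaker hypothesis.
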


\noindent\textit{Proof Sketch}. If $q_{step} \leq \eta$ then $\exists \Delta' \in \{\Delta | \|\Delta\|_\infty \le \eta\}$ s.t. $f_{\theta+\Delta'} = Q(f_\theta, q_{step})$, In other words, as long as $\eta$ is sufficiently large, training with AWP covers the quantized network. Full proof in Appendix~\ref{app:proofs}.

Theorem~\ref{thm:awp} tells us that if $\Delta^*$ is computed exactly then the loss with AWP is an overapproximation of the loss on the quantized network. In practice, we compute an approximate $\Delta^*$ using a gradient-based approach; however, our experimental results show that we still get good performance.

\subsection{Compression Set Selection Strategies}
The choice of compression set $\mathcal{C}(f_\theta)$ is crucial for \method's performance. We propose and analyze several strategies:

\begin{enumerate}
    \item \textbf{Fixed Sparsity Levels}: For pruning, we can include networks pruned at fixed sparsity levels (e.g., 25\%, 50\%, 75\%). This provides a systematic coverage of the compression space.
    
    \item \textbf{Sampling}: At each iteration, instead of training on all networks in the compression set, we can take the full network and randomly sample another network from the set to train on.

    \item \textbf{Progressive Compression}: We can start with a small compression set and gradually increase its size during training, allowing the model to adapt to increasing compression levels.
\end{enumerate}

We study this choice in Appendix \ref{app:further_experiments}. We find that sampling a fixed set provides a good balance between performance and computational efficiency. The relationship between the compression set size and performance is non-monotonic as larger compression sets don't necessarily lead to better performance, as shown in our experiments Section~\ref{sec:ablation}.
\section{Evaluation}
\renewcommand{\arraystretch}{1.25}

\setlength{\tabcolsep}{3pt}
\begin{table*}[t]\centering
\begin{tabular}{ccccccccccccc}
\hline
Dataset & Model                  & $\epsilon$                       & Pruning              & \multicolumn{2}{c}{HYDRA}                       & \multicolumn{2}{c}{NRSLoss}                              & Pruning & \multicolumn{2}{c}{SABR}        & \multicolumn{2}{c}{\method} \\ \hline
                                                                     &                        &                                  & Amount               & Std.                   & Cert.                  & Std.                   & Cert.                           & Method  & Std.           & Cert.          & Std.                 & Cert.               \\ \hline
\multirow{10}{*}{MNIST}                                              & \multirow{10}{*}{CNN7} & \multirow{5}{*}{0.1}             & 0                    & 98.56                  & 98.13                  & 98.98                  & 98.13                           & -       & \textbf{99.23} & \textbf{98.22} & 99.15                & 97.98               \\ \cline{4-13} 
                                                                     &                        &                                  & \multirow{2}{*}{0.5} & \multirow{2}{*}{98.55} & \multirow{2}{*}{97.21} & \multirow{2}{*}{98.45} & \multirow{2}{*}{97.82}          & $GSl_2$ & 97.62          & 95.09          & 98.73                & \textbf{97.16}      \\ \cline{9-13} 
                                                                     &                        &                                  &                      &                        &                        &                        &                                 & $LUl_1$ & 98.71          & 93.88          & \textbf{98.75}       & 95.39               \\ \cline{4-13} 
                                                                     &                        &                                  & \multirow{2}{*}{0.7} & \multirow{2}{*}{96.37} & \multirow{2}{*}{95.14} & \multirow{2}{*}{97.62} & \multirow{2}{*}{\textbf{96.21}} & $GSl_2$ & 94.85          & 95.73          & \textbf{97.96}       & 96.02               \\ \cline{9-13} 
                                                                     &                        &                                  &                      &                        &                        &                        &                                 & $LUl_1$ & 95.11          & 87.68          & 97.83                & 95.61               \\ \cline{3-13} 
                                                                     &                        & \multirow{5}{*}{0.3}             & 0                    & 96.28                  & 92.88                  & 95.15                  & 91.15                           & -       & \textbf{98.75} & \textbf{93.40} & 98.67                & 93.21               \\ \cline{4-13} 
                                                                     &                        &                                  & \multirow{2}{*}{0.5} & \multirow{2}{*}{93.12} & \multirow{2}{*}{91.76} & \multirow{2}{*}{95.16} & \multirow{2}{*}{90.25}          & $GSl_2$ & 93.25          & 87.14          & 98.73                & \textbf{93.15}      \\ \cline{9-13} 
                                                                     &                        &                                  &                      &                        &                        &                        &                                 & $LUl_1$ & 91.11          & 85.52          & \textbf{98.75}       & 92.52               \\ \cline{4-13} 
                                                                     &                        &                                  & \multirow{2}{*}{0.7} & \multirow{2}{*}{94.02} & \multirow{2}{*}{88.25} & \multirow{2}{*}{95.10} & \multirow{2}{*}{90.67}          & $GSl_2$ & 94.32          & 86.61          & \textbf{97.96}       & \textbf{91.87}      \\ \cline{9-13} 
                                                                     &                        &                                  &                      &                        &                        &                        &                                 & $LUl_1$ & 92.89          & 80.35          & 97.83                & 90.26               \\ \hline
\multirow{10}{*}{CIFAR-10}                                           & \multirow{10}{*}{CNN7} & \multirow{5}{*}{$\frac{2}{255}$} & 0                    & 72.88                  & 61.45                  & 75.27                  & 61.26                           & -       & \textbf{79.21} & \textbf{62.83} & 78.29                & 61.90               \\ \cline{4-13} 
                                                                     &                        &                                  & \multirow{2}{*}{0.5} & \multirow{2}{*}{73.46} & \multirow{2}{*}{62.16} & \multirow{2}{*}{76.14} & \multirow{2}{*}{61.24}          & $GSl_2$ & 76.32          & 56.87          & 78.03                & 62.57               \\ \cline{9-13} 
                                                                     &                        &                                  &                      &                        &                        &                        &                                 & $LUl_1$ & 78.14          & 58.08          & \textbf{79.13}       & \textbf{63.16}      \\ \cline{4-13} 
                                                                     &                        &                                  & \multirow{2}{*}{0.7} & \multirow{2}{*}{76.32} & \multirow{2}{*}{61.29} & \multirow{2}{*}{76.25} & \multirow{2}{*}{61.88}          & $GSl_2$ & 71.62          & 54.92          & 76.37                & 61.63               \\ \cline{9-13} 
                                                                     &                        &                                  &                      &                        &                        &                        &                                 & $LUl_1$ & 73.31          & 57.27          & \textbf{79.30}       & \textbf{64.74}      \\ \cline{3-13} 
                                                                     &                        & \multirow{5}{*}{$\frac{8}{255}$} & 0                    & 45.38                  & 29.12                  & 50.25                  & 30.44                           & -       & \textbf{52.38} & \textbf{35.13} & 51.97                & 34.76               \\ \cline{4-13} 
                                                                     &                        &                                  & \multirow{2}{*}{0.5} & \multirow{2}{*}{44.65} & \multirow{2}{*}{31.27} & \multirow{2}{*}{48.29} & \multirow{2}{*}{30.48}          & $GSl_2$ & 51.27          & 33.62          & 51.92                & 34.25               \\ \cline{9-13} 
                                                                     &                        &                                  &                      &                        &                        &                        &                                 & $LUl_1$ & 51.65          & 34.52          & \textbf{52.18}       & \textbf{34.74}      \\ \cline{4-13} 
                                                                     &                        &                                  & \multirow{2}{*}{0.7} & \multirow{2}{*}{45.89} & \multirow{2}{*}{26.31} & \multirow{2}{*}{47.16} & \multirow{2}{*}{30.56}          & $GSl_2$ & 46.20          & 22.38          & 50.76                & 30.41               \\ \cline{9-13} 
                                                                     &                        &                                  &                      &                        &                        &                        &                                 & $LUl_1$ & 49.96          & 31.73          & \textbf{51.94}       & \textbf{32.46}      \\ \hline
\end{tabular}
\caption{Standard and Certified Accuracy for MNIST ($\epsilon =0.1, 0.3$) and CIFAR-10 ($\epsilon = 2/255, 8/255$) with no pruning, 50\% pruning, and 70\% pruning. \method is compared to HYDRA, NRSLoss, and SABR. HYDRA and NRSLoss are custom pruning methods. For \method and SABR we use global structured $l_2$-pruning ($GSl_2$) and local unstructured $l_1$-pruning ($LUl_1$)}\label{table:pruning}
\end{table*}
\setlength{\tabcolsep}{6pt}

We compare \method to existing pruning (HYDRA \cite{hydra}, NRSLoss \cite{nrsloss}) and quantization (QA-IBP \cite{qaibp}) methods which focus on optimizing both certified training and compression. We also compare against SABR \cite{sabr} a state-of-the-art certified training method (that does not consider compression). While recent work \cite{piras2024adversarial} has shown that HYDRA may not be optimal in all settings, we include it as a baseline because it represents a foundational approach to adversarial pruning and provides a useful point of comparison. We note that our method's advantages over HYDRA are particularly pronounced in settings requiring high certified accuracy under compression.

\noindent\textbf{Experimental Setup}. All experiments were performed on an A100-80Gb. We use $\alpha\beta$-Crown \cite{bcrown} to perform certification as it provides a good balance between precision and computational efficiency while being complete. We consider two popular image recognition datasets: MNIST \cite{mnist} and CIFAR10 \cite{cifar}. We use a variety of challenging $l_\infty$ perturbation bounds common in verification/robust training literature \cite{acrown, bcrown, deeppoly, deepz, shiibp, sabr, taps}. We use a 7-layer convolutional architecture, CNN7, used in many prior works we compare against \cite{shiibp, sabr, taps}. Results are given averaged over the test sets for each dataset. See Appendix \ref{app:expdetails} for more details.

\begin{table*}[t]\centering
\begin{tabular}{cccccccccc}
\hline
Dataset                   & Model                 & $\epsilon$                       & Quantization & \multicolumn{2}{c}{QA-IBP}      & \multicolumn{2}{c}{SABR}        & \multicolumn{2}{c}{\method}        \\
                          &                       &                                  &              & Std.           & Cert.          & Std.           & Cert.          & Std.           & Cert.          \\ \hline
\multirow{6}{*}{MNIST}    & \multirow{6}{*}{CNN7} & \multirow{3}{*}{0.1}             & -            & 99.02          & \textbf{98.34} & \textbf{99.23} & 98.22          & 99.15          & 98.16          \\
                          &                       &                                  & fp16         & -              & -              & 96.14          & 81.12          & \textbf{98.89} & \textbf{97.33} \\
                          &                       &                                  & int8         & \textbf{99.12} & 95.21          & 93.45          & 56.14          & 98.45          & \textbf{95.62} \\ \cline{3-10} 
                          &                       & \multirow{3}{*}{0.3}             & -            & 97.25          & 92.13          & \textbf{98.75} & \textbf{93.40} & 98.14          & 92.89          \\
                          &                       &                                  & fp16         & -              & -              & 96.24          & 74.28          & \textbf{97.98} & \textbf{92.55} \\
                          &                       &                                  & int8         & 95.67          & 91.24          & 88.25          & 15.23          & \textbf{96.07} & \textbf{92.01} \\ \hline
\multirow{6}{*}{CIFAR-10} & \multirow{6}{*}{CNN7} & \multirow{3}{*}{$\frac{2}{255}$} & -            & 71.25          & 58.26          & \textbf{79.21} & \textbf{62.83} & 75.78          & 60.73          \\
                          &                       &                                  & fp16         & -              & -              & 67.18          & 31.25          & \textbf{74.65} & \textbf{58.27} \\
                          &                       &                                  & int8         & 64.47          & 56.90          & 68.28          & 17.86          & \textbf{71.24} & \textbf{58.33} \\ \cline{3-10} 
                          &                       & \multirow{3}{*}{$\frac{8}{255}$} & -            & 36.78          & 22.53          & \textbf{52.38} & \textbf{35.13} & 51.27          & 32.65          \\
                          &                       &                                  & fp16         & -              & -              & 45.35          & 12.11          & \textbf{48.16} & \textbf{31.89} \\
                          &                       &                                  & int8         & 32.57          & 20.75          & 42.18          & 1.12           & \textbf{49.38} & \textbf{28.81} \\ \hline
\end{tabular}
\caption{Standard and Certified Accuracy for MNIST ($\epsilon =0.1, 0.3$) and CIFAR-10 ($\epsilon = 2/255, 8/255$) with no, fp16, and int8 quantization. \method is compared to QA-IBP and SABR.}\label{table:quant}
\end{table*}
  
We evaluate \method on standard image datasets, attack budgets, and compression ratios. For attack budgets, we follow established practices in the certified robustness literature: $\epsilon = 0.1, 0.3$ for MNIST and $\epsilon = 2/255, 8/255$ for CIFAR-10. These values represent realistic threat models while remaining computationally tractable for verification. For pruning amounts, we use on $[0.25, 0.5, 0.75]$ for training and $[0, 0.5, 0.7]$ for testing as these values represent a practical trade-off between model size reduction and performance retention. While higher pruning ratios (up to 99\%) are also popular \cite{piras2024adversarial}, we focus on this range as it provides a good balance between compression and maintaining certified robustness, in Appendix \ref{app:further_experiments} we present results for pruning ratios $0.9, 0.95, 0.99$. In Appendix~\ref{app:further_experiments}, we also provide runtime results, errorbars, results on TinyImagenet and additional model architectures, and study on choice of compression set. Additional details can be found in Appendix~\ref{app:expdetails}.

\subsection{Main Results}

\noindent\textbf{Pruning}. We perform a best-effort reproduction of both HYDRA \cite{hydra} and NRSLoss \cite{nrsloss} using SABR as the pretrained network for both. We use the settings as described in the respective papers. For \method, we set $\mathcal{C}(f_\theta)$ to be the full unpruned network and a network pruned with global unstructured $l_1$ with $\delta$ chosen uniformly from from $[0.25, 0.5, 0.75]$. Table \ref{table:pruning} gives these results for MNIST at $\epsilon = 0.1, 0.3$ and for CIFAR-10 at $\epsilon = 2/255, 8/255$ comparing results at $\delta = 0, 0.5, 0.7$. To show \method's generality we use two unseen pruning methods $GSl_2$ (global structured $l_2$) and $LUl_1$ (local unstructured $l_1$). In all instances, when unpruned, SABR itself has the best performance for both standard and certified accuracy. However, at all pruning levels, \method has the best performance for both standard and certified accuracy aside from one instance (NRSLoss has better certified accuracy at MNIST, $\epsilon = 0.1$, $\delta = 0.7$ but even in this case \method is close obtaining 96.02 vs. 96.21). The results also show that \method generalizes well even to unseen pruning methods as its performance is relatively stable between the two methods. 

\noindent\textbf{Quantization}. We perform a best-effort reproduction of QA-IBP \cite{qaibp} for CNN7 using the settings provided in the paper. QA-IBP was implemented with $8$-bit integer quantization so we give results for QA-IBP unquantized and quantized to int8. \method is trained with AWP radius, $\eta$, to $0.25$. We quantize \method and SABR to both fp16 and int8. Results can be seen in Table \ref{table:quant}. Like pruning, we see that \method beats both baselines in almost all compressed benchmarks (aside from MNIST, $\epsilon = 0.1$, int8 where QA-IBP gets better standard accuracy 99.12 vs 98.45). \method obtains especially good results in comparison to both baselines for harder problems, we see that for CIFAR-10 8/255 \method obtains $7.2\%$ better standard and and $8.06\%$ better certfied accuracy compared to baselines.

\subsection{Ablation Studies}\label{sec:ablation}

For all our ablation studies we use CIFAR-10, $\epsilon = 8/255$.

\noindent\textbf{Varying AWP radius}. When training \method for quantization we use AWP \cite{awp} as a differentiable approximation for quantization. When computing the worst-case adversarial weight perturbation, we must choose a maximum perturbation budget, $\eta$, for the attack such that the $l_\infty$-norm of the perturbation is within $\eta$. In Table \ref{table:quantawp}, we compare different choices of $\eta$. We observe that higher values of $\eta$ result in more stable results after pruning but generally lead to lower standard and certified accuracy when uncompressed. Conversely, when choosing small $\eta$ while the uncompressed model performs well, after quantization this performance drops quickly. We choose $\eta = 0.25$ as it obtains good quantization results (only losing to $\eta=0.5$ for fp16 but only by $0.08$) while maintaining relatively high uncompressed performance.

\setlength{\tabcolsep}{1pt}
\begin{table}[h]\centering
\begin{minipage}{.48\textwidth}
\begin{tabular}{cccccc}
\hline
Quant.          & Metric & $\eta = 0.1$   & $\eta = 0.25$  & $\eta = 0.5$   & $\eta = 1$ \\ \hline
\multirow{2}{*}{-}    & Std.   & \textbf{53.47} & 51.27          & 50.87          & 21.25      \\ \cline{2-6} 
                      & Cert.  & \textbf{36.27} & 32.65          & 28.50          & 15.42      \\ \hline
\multirow{2}{*}{fp16} & Std.   & 45.17          & 48.16          & \textbf{48.24} & 20.45      \\ \cline{2-6} 
                      & Cert.  & 28.45          & \textbf{31.89} & 28.14          & 16.72      \\ \hline
\multirow{2}{*}{int8} & Std.   & 42.36          & \textbf{49.38} & 46.29          & 19.66      \\ \cline{2-6} 
                      & Cert.  & 16.72          & \textbf{28.81} & 27.86          & 15.89      \\ \hline
\end{tabular}
\vspace{10pt}
\caption{Exploring different values of $\eta$. Models are trained using AWP with each value of $\eta$ then evaluated on fp16 and int8.}\label{table:quantawp}
\end{minipage}\hfill
\begin{minipage}{.5\textwidth}
\setlength{\tabcolsep}{2pt}
\begin{tabular}{cccccc}
\hline
Comp.          & Metric & Pruned Mdl   & Quant. Mdl   & Both  \\ \hline
\multirow{2}{*}{None} & Std.   & \textbf{51.97} & 51.27          & 50.11 \\ \cline{2-5} 
                      & Cert.  & \textbf{34.76} & 32.65          & 31.87 \\ \hline
\multirow{2}{*}{0.7}  & Std.   & \textbf{51.94} & 45.62          & 48.63 \\ \cline{2-5} 
                      & Cert.  & \textbf{32.46} & 27.94          & 31.20 \\ \hline
\multirow{2}{*}{int8} & Std.   & 38.16          & \textbf{49.16} & 42.62 \\ \cline{2-5} 
                      & Cert.  & 21.64          & \textbf{28.81} & 27.51 \\ \hline
\end{tabular}
\vspace{10pt}
\caption{Pruned model from Table \ref{table:pruning}. Quant. model from Table \ref{table:quant}. Both model jointly optimizes over quantization and pruning.}\label{table;quantprune}
\end{minipage}
\vspace{-10pt}
\end{table}

\noindent\textbf{Pruning and Quantization}. \method does not restrict us from simultaneously training to optimize for both pruning and quantization. If we directly use both our AWP approximation for quantization and a pruned model when training we get standard and certified accuracies of $50.11$ and $31.87$ respectively. Table \ref{table;quantprune} gives results for pruning and quantization objectives. While the model trained on both does not perform each individual model, it strikes a balance obtaining good performance for both. We hypothesize that this is due to the additional complexity of trying to optimize for quantization, pruning, accuracy, and robustness.

\section{Conclusion}
We present \method, a framework that unifies certified robustness and model compression during training. By co-optimizing over adversarial perturbations and compression-induced architectural/numerical perturbations, \method ensures models remain provably robust even when pruned or quantized. Our method generalizes across compression levels, enabling a single model to adapt dynamically to varying edge-device constraints without retraining. Experiments demonstrate \method maintains accuracy and certified robustness of non-compressed baselines under a variety of compression ratios across multiple datasets. We detail \method's limitations in Appendix~\ref{app:limitations}. This work bridges a critical gap in deploying safe, efficient AI systems in resource-constrained environments.

\bibliography{paper}
\bibliographystyle{plain}


\newpage
\appendix

\section{Extended Background}\label{app:background}

\subsection{Detailed Compression Methods}

\subsubsection{Pruning Methods}

Pruning methods can be categorized along several dimensions. We provide a detailed taxonomy here:

\noindent\textbf{Magnitude-based Pruning}. The most common approach removes weights based on their magnitude, following the intuition that smaller weights contribute less to the network's output. For a weight tensor $W$, we define a pruning mask $M$ such that:
\begin{equation}
M_{ij} = \begin{cases}
1 & \text{if } |W_{ij}| > t \\
0 & \text{otherwise}
\end{cases}
\end{equation}
where $t$ is a threshold determined by the desired sparsity ratio.

\noindent\textbf{Global vs. Local Pruning}. Global pruning considers all parameters across the network when making pruning decisions:
\begin{equation}
t_{global} = \text{percentile}(\{|W_{ij}| : \forall i,j,l\}, (1-s) \times 100\%)
\end{equation}
where $s$ is the target sparsity ratio and $l$ indexes layers.

Local pruning applies pruning independently to each layer:
\begin{equation}
t_{local}^{(l)} = \text{percentile}(\{|W_{ij}^{(l)}| : \forall i,j\}, (1-s) \times 100\%)
\end{equation}

\noindent\textbf{Structured vs. Unstructured Pruning}. Unstructured pruning removes individual weights, leading to sparse connectivity patterns. Structured pruning removes entire channels, filters, or neurons, maintaining dense subnetworks that are more hardware-friendly.

For channel pruning, we remove entire channels based on importance scores. Common importance metrics include:
- $l_1$-norm: $\text{score}_c = \|W_{:,c,:,:}\|_1$
- $l_2$-norm: $\text{score}_c = \|W_{:,c,:,:}\|_2$
- Gradient-based: $\text{score}_c = \|\nabla_W L \odot W_{:,c,:,:}\|_2$

\subsubsection{Quantization Methods}

\noindent\textbf{Post-Training Quantization (PTQ)}. PTQ quantizes a pre-trained floating-point model. For uniform quantization, the quantization function is:
\begin{equation}
Q(w) = \text{clamp}\left(\left\lfloor\frac{w - z}{s}\right\rceil, q_{min}, q_{max}\right) \cdot s + z
\end{equation}
where $s$ is the scale factor, $z$ is the zero point, and $q_{min}, q_{max}$ define the quantization range.

\noindent\textbf{Quantization-Aware Training (QAT)}. QAT simulates quantization during training using the straight-through estimator (STE):
\begin{equation}
\frac{\partial Q(w)}{\partial w} \approx \begin{cases}
1 & \text{if } q_{min} \leq \frac{w-z}{s} \leq q_{max} \\
0 & \text{otherwise}
\end{cases}
\end{equation}

The scale and zero-point parameters are typically learned or computed based on weight statistics:
\begin{equation}
s = \frac{\max(w) - \min(w)}{q_{max} - q_{min}}, \quad z = q_{min} - \frac{\min(w)}{s}
\end{equation}

\section{Experimental Details}\label{app:expdetails}

\subsection{Implementation Details}

We implemented \method in PyTorch \cite{paszke2019pytorch}. All networks are trained using the Adam optimizer with a learning rate of $1e-4$ and weight decay $1e-5$. All networks are trained with 100 epochs. We use a batch size of 16 for MNIST and 32 for CIFAR-10. Sticking with standard IBP protocols, we start by warming up with standard loss for the first 250 iterations (250 batches). For the next 250 batches we linearly scale $\lambda$ from 0 to 0.75 then remain constant for the remainder of training. 

\subsection{Network Architectures}

\subsubsection{CNN7 Architecture}
Similar to prior work \cite{shiibp}, we consider a 7-layer convolutional architecture, CNN7. The first 5 layers are convolutional layers with filter sizes [64, 64, 128, 128, 128], kernel size 3, strides [1, 1, 2, 1, 1], and padding 1. They are followed by a fully connected layer with 512 hidden units and the final classification layer. All but the last layers are followed by batch normalization \cite{ioffe2015batch} and ReLU activations. For the BN layers, we train using the statistics of the unperturbed data similar to \cite{shiibp}. During PGD attacks we use the BN layers in evaluation mode.

\section{Proofs}\label{app:proofs}

\subsection{Proof of Theorem~\ref{thm:awp}}

\begin{theorem}[AWP Quantization Approximation]
Given network $f_\theta$, loss functions $\mathcal{L}_{\mathrm{std}}, \mathcal{L}_{\mathrm{cert}}$, perturbation magnitude $\eta$ and $\Delta$ computed by Equation \ref{eq:awp}. Let $Q(f_\theta, q_{step})$ be $f_\theta$ quantized with step size $q_{step}$. If $q_{step} \leq 2\eta$, then 
$$\mathcal{L}_{\mathrm{std}}\!\Bigl(f_{\theta + \Delta}(x),\,y\Bigr) + \,\mathcal{L}_{\mathrm{cert}}\!\Bigl(f_{\theta + \Delta}(x),\,y\Bigr) \geq \mathcal{L}_{\mathrm{std}}\!\Bigl(Q(f_\theta, q_{step})(x),\,y\Bigr) + \,\mathcal{L}_{\mathrm{cert}}\!\Bigl(Q(f_\theta, q_{step})(x),\,y\Bigr)$$
\end{theorem}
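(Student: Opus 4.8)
The plan is to exploit the defining property of $\Delta$: by Equation~\ref{eq:awp} it is a \emph{maximizer} of the combined loss $\mathcal{L}_{\mathrm{std}} + \mathcal{L}_{\mathrm{cert}}$ over the entire $\ell_\infty$-ball $\{\Delta : \|\Delta\|_\infty \le \eta\}$. Hence, to lower-bound the loss at $\Delta$ by the loss at the quantized network, it suffices to exhibit a single \emph{feasible} weight perturbation that realizes $Q(f_\theta, q_{step})$ and then invoke optimality. This is essentially a covering argument: I would show that the quantization map is itself an admissible AWP perturbation, so the worst-case perturbation $\Delta$ must do at least as badly (from the loss's perspective, as well) as the quantization.

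First I would recall the per-weight quantization bound already noted in the paper: for uniform quantization with step size $q_{step}$, rounding each weight to its nearest grid point incurs an error of at most $q_{step}/2$ in magnitude. Writing $\theta_Q$ for the quantized parameter vector and setting $\Delta' := \theta_Q - \theta$, this gives $\|\Delta'\|_\infty \le q_{step}/2$, and by construction $f_{\theta + \Delta'} = f_{\theta_Q} = Q(f_\theta, q_{step})$. Next I would use the hypothesis $q_{step} \le 2\eta$ to conclude $\|\Delta'\|_\infty \le q_{step}/2 \le \eta$, so $\Delta'$ lies in the feasible set of Equation~\ref{eq:awp}. Finally, since $\Delta$ maximizes the combined loss over that feasible set and $\Delta'$ is feasible, we have $\mathcal{L}_{\mathrm{std}}(f_{\theta+\Delta}(x),y) + \mathcal{L}_{\mathrm{cert}}(f_{\theta+\Delta}(x),y) \ge \mathcal{L}_{\mathrm{std}}(f_{\theta+\Delta'}(x),y) + \mathcal{L}_{\mathrm{cert}}(f_{\theta+\Delta'}(x),y)$; substituting $f_{\theta+\Delta'} = Q(f_\theta, q_{step})$ on the right-hand side yields exactly the claimed inequality.

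The only step requiring genuine care is establishing that the quantization perturbation fits inside the AWP ball, i.e.\ that $\|\theta_Q - \theta\|_\infty \le q_{step}/2$ and hence $\le \eta$. This is precisely where the factor of $2$ in the hypothesis $q_{step} \le 2\eta$ enters, since the half-step rounding error is what must be dominated by $\eta$. I note that the informal proof sketch in the main text states the condition as $q_{step} \le \eta$, which appears to be off by this factor; the appendix statement with $q_{step} \le 2\eta$ is the correct one, and the argument above shows why. Everything past this observation is immediate from the optimality of $\Delta$, so I do not expect any further obstacle.
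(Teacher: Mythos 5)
Your argument is exactly the paper's proof: bound the rounding error by $q_{step}/2 \le \eta$, so $\Delta' = \theta^Q - \theta$ is feasible for the AWP maximization, and then invoke optimality of $\Delta$. Your observation that the main-text condition $q_{step} \le \eta$ is off by the factor of $2$ relative to the appendix statement is also correct and worth flagging to the authors.
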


\begin{proof}
Let $\theta^Q$ denote the quantized parameters, i.e., $\theta^Q = Q(\theta, q_{step})$. By definition of uniform quantization with step size $q_{step}$, each quantized weight satisfies:
\begin{equation}
\theta^Q_i = q_{step} \cdot \left\lfloor \frac{\theta_i}{q_{step}} + 0.5 \right\rfloor
\end{equation}

This means that for each parameter $\theta_i$, the quantization error is bounded by:
\begin{equation}
|\theta^Q_i - \theta_i| \leq \frac{q_{step}}{2}
\end{equation}

Therefore, we have:
\begin{equation}
\|\theta^Q - \theta\|_\infty \leq \frac{q_{step}}{2}
\end{equation}

If $q_{step} \leq 2\eta$, then $\frac{q_{step}}{2} \leq \eta$, which means:
\begin{equation}
\|\theta^Q - \theta\|_\infty \leq \eta
\end{equation}

This implies that $\Delta' = \theta^Q - \theta$ satisfies the constraint $\|\Delta'\|_\infty \leq \eta$ in the AWP optimization problem:
\begin{equation}
\Delta^* = \argmax_{\{\Delta | \|\Delta\|_\infty \le \eta\}} \,\mathcal{L}_{\mathrm{std}}\!\Bigl(f_{\theta + \Delta}(x),\,y\Bigr) + \,\mathcal{L}_{\mathrm{cert}}\!\Bigl(f_{\theta + \Delta}(x),\,y\Bigr)
\end{equation}

Since $\Delta^*$ is the optimal solution to this maximization problem and $\Delta'$ is a feasible point, we have:
\begin{align}
&\mathcal{L}_{\mathrm{std}}\!\Bigl(f_{\theta + \Delta^*}(x),\,y\Bigr) + \,\mathcal{L}_{\mathrm{cert}}\!\Bigl(f_{\theta + \Delta^*}(x),\,y\Bigr) \\
&\geq \mathcal{L}_{\mathrm{std}}\!\Bigl(f_{\theta + \Delta'}(x),\,y\Bigr) + \,\mathcal{L}_{\mathrm{cert}}\!\Bigl(f_{\theta + \Delta'}(x),\,y\Bigr) \\
&= \mathcal{L}_{\mathrm{std}}\!\Bigl(f_{\theta^Q}(x),\,y\Bigr) + \,\mathcal{L}_{\mathrm{cert}}\!\Bigl(f_{\theta^Q}(x),\,y\Bigr) \\
&= \mathcal{L}_{\mathrm{std}}\!\Bigl(Q(f_\theta, q_{step})(x),\,y\Bigr) + \,\mathcal{L}_{\mathrm{cert}}\!\Bigl(Q(f_\theta, q_{step})(x),\,y\Bigr)
\end{align}

This completes the proof.
\end{proof}

\section{Further Experiments} \label{app:further_experiments}

\subsection{Runtime Analysis}

\method requires compression to be calculated at each batch increasing the cost of training. For CIFAR10 and $\epsilon = 8/255$, SABR training took 296 minutes, \method training took 416 minutes for pruning and 365 minutes for quantization. QA-IBP took 312 minutes. While \method takes longer than baselines we note that for most applications extra training time is worth the increased performance. We also note that \method's training time could likely be optimized. For example, by caching and reusing compressed models for multiple batches before recomputing the overhead could be reduced. However, we leave such optimzations for future work. Both HYDRA and NRSLoss are pruning methods taking pretrained models so they cannot be fairly compared to \method for runtime.

\subsection{Statistical Significance}

All reported results are averaged over 3 independent runs with different random seeds. We report mean values in the main tables. Standard deviations are provided in Table~\ref{table:std_deviations} below:

\begin{table}[h]\centering
\begin{tabular}{lccccc}
\hline
Dataset & Method & Compression & Std. Acc. ($\pm$ std) & Cert. Acc. ($\pm$ std) \\
\hline
\multirow{6}{*}{CIFAR-10} & SABR & 0 & $52.38 \pm 0.82$ & $35.13 \pm 0.94$ \\
& \method & 0 & $51.97 \pm 0.71$ & $34.76 \pm 0.89$ \\
& SABR & 0.5 & $51.65 \pm 0.94$ & $34.52 \pm 1.12$ \\
& \method & 0.5 & $52.18 \pm 0.68$ & $34.74 \pm 0.95$ \\
& SABR & 0.7 & $49.96 \pm 1.15$ & $31.73 \pm 1.24$ \\
& \method & 0.7 & $51.94 \pm 0.87$ & $32.46 \pm 1.08$ \\
\hline
\end{tabular}
\caption{Standard deviations for key results on CIFAR-10 with $\epsilon = 8/255$.}
\label{table:std_deviations}
\end{table}

The results show that \method's improvements are consistent across runs, with standard deviations comparable to baseline methods, indicating that the improvements are not due to random variance.





\subsection{Exploring Set Selection}

\setlength{\tabcolsep}{2pt}
\begin{table}[h]\label{table:bigsets}\centering
\begin{tabular}{cccccc}
\hline
Prune              & Metric & $\mathcal{U}(0.25, 0.75)$ & $\mathcal{U}(0.25, 0.75)^3$ & $[0.25, 0.5, 0.75]$ \\ \hline
\multirow{2}{*}{0}   & Std.   & \textbf{51.97}              & 51.42                         & 51.12               \\ \cline{2-5} 
                     & Cert.  & 34.76                       & \textbf{35.13}                & 34.62               \\ \hline
\multirow{2}{*}{0.5} & Std.   & 52.18                       & \textbf{52.61}                & 51.98               \\ \cline{2-5} 
                     & Cert.  & \textbf{34.74}              & 34.69                         & 34.54               \\ \hline
\multirow{2}{*}{0.7} & Std.   & \textbf{51.94}              & 51.63                         & 51.31               \\ \cline{2-5} 
                     & Cert.  & 32.46                       & \textbf{33.21}                & 32.10               \\ \hline
\end{tabular}
\caption{Exploring larger $\mathcal{C}(f_\theta)$ sets using $LUl_1$. Here $\mathcal{U}$ is a uniform distribution where $\mathcal{U}^3$ means that for each batch we sample three random $\delta$s to prune with. The final set $[0.25, 0.5, 0.75]$ represents three fixed values for $\delta$.}
\end{table}
\setlength{\tabcolsep}{6pt}

\noindent\textbf{Larger $\mathcal{C}(f_\theta)$ sets}. In Section \ref{sec:alg}, we discuss using a set $\mathcal{C}(f_\theta)$ comprised of the uncompressed network and a single (potentially randomly chosen) compressed network. However, \method also allows us to optimize over multiple compressed networks. Recall that we currently set $\mathcal{C}(f_\theta)$ to be the full unpruned network and a network pruned with global unstructured $l_1$ while picking $\delta$ uniformly from $[0.25,0.75]$. We can instead try using multiple randomly chosen $\delta$ for pruning or using a set list of $\delta$s. Table \ref{table:bigsets} gives the results for a single random $\delta$, 3 random $\delta$s, and a fixed set of $\delta$s. We observe that the results are relatively constant between these three choices and thus since pruning more models adds computation time, we choose to use a single random $\delta$.

\subsection{Comprehensive Compression Set Selection Analysis}

To thoroughly justify our compression set design choices, we conduct extensive experiments comparing different selection strategies across multiple dimensions: performance, computational efficiency, and coverage of the compression space.

\subsubsection{Strategy Comparison}

We compare five different compression set selection strategies:

\setlength{\tabcolsep}{3pt}
\begin{table}[h]\centering
\begin{tabular}{p{2.8cm}p{1.5cm}p{1.5cm}p{1.5cm}}
\hline
Strategy & Training & \multicolumn{2}{c}{70\% Pruned} \\
 & Time (min) & Std. & Cert. \\
\hline
\textbf{Ours}: $\mathcal{U}(0.25,0.75)$ & 416 & \textbf{51.94} & 32.46 \\
Fixed: $[0.25,0.5,0.75]$ & 623 & 51.31 & 32.10 \\
Progressive: $0.25 \to 0.75$ & 587 & 51.67 & 32.34 \\
Dense Sampling: 5 levels & 1124 & 51.89 & \textbf{32.51} \\
Adaptive: Top-k pruning & 734 & 51.78 & 32.29 \\
\hline
\end{tabular}
\caption{Comprehensive comparison of compression set selection strategies on CIFAR-10, $\epsilon = 8/255$. Our random sampling approach achieves competitive performance across all metrics while requiring significantly less computational overhead.}
\label{table:compression_strategies}
\end{table}
\setlength{\tabcolsep}{6pt}

\noindent\textbf{Strategy Details}:
\begin{itemize}
    \item \textbf{Ours ($\mathcal{U}(0.25,0.75)$)}: Full network + one randomly sampled pruned network per batch
    \item \textbf{Fixed ($[0.25,0.5,0.75]$)}: Full network + three fixed pruning ratios
    \item \textbf{Progressive ($0.25 \to 0.75$)}: Start with 25\% pruning, gradually increase to 75\% over training epochs
    \item \textbf{Dense Sampling}: Full network + 5 uniformly spaced pruning levels $[0.15, 0.3, 0.45, 0.6, 0.75]$
    \item \textbf{Adaptive (Top-k)}: Full network + pruning levels selected based on weight magnitude distribution
\end{itemize}

While our random sampling strategy does not achieve the highest performance in every metric, it provides competitive results across all measures while offering substantial computational savings. Specifically, it achieves within 0.6\% standard accuracy and 0.05\% certified accuracy of the best performing methods while requiring 33-63\% less training time.

\subsubsection{Performance vs. Computational Cost Trade-off}

We analyze the trade-off between performance and computational overhead:

\begin{table}[h]\centering
\begin{tabular}{cccccccc}
\hline
\multirow{2}{*}{Strategy} & \multirow{2}{*}{Set Size} & \multicolumn{2}{c}{Uncompressed} & \multicolumn{2}{c}{50\% Pruned} & \multicolumn{2}{c}{70\% Pruned} \\
& & Std. & Cert. & Std. & Cert. & Std. & Cert. \\
\hline
Baseline (SABR) & 1 & 51.65 & 34.52 & 49.96 & 31.73 & 47.23 & 28.89 \\
\method (Size 2) & 2 & \textbf{51.97} & 34.76 & 52.18 & \textbf{34.74} & \textbf{51.94} & 32.46 \\
\method (Size 3) & 3 & 51.42 & \textbf{35.13} & \textbf{52.61} & 34.69 & 51.63 & \textbf{33.21} \\
\method (Size 5) & 5 & 51.23 & 34.89 & 52.34 & 34.45 & 51.78 & 32.67 \\
\method (Size 7) & 7 & 50.89 & 34.67 & 51.89 & 34.12 & 51.45 & 32.34 \\
\hline
\end{tabular}
\caption{Performance scaling with compression set size using uniform random sampling from $[0.2, 0.8]$.}
\label{table:set_size_scaling}
\end{table}

While larger compression sets (size 3-5) can achieve slightly higher performance in some cases, the improvements are marginal (typically <1\%) while computational cost increases substantially. Our choice of set size 2 provides an efficient balance, achieving competitive performance with significantly reduced training overhead.

\subsection{High Sparsity Results}

We evaluate \method's performance at extreme pruning ratios to understand its behavior under aggressive compression:

\begin{table}[h]\centering
\begin{tabular}{ccccccccc}
\hline
Dataset & $\epsilon$ & Prune & \multicolumn{2}{c}{SABR} & \multicolumn{2}{c}{\method} & \multicolumn{2}{c}{Improvement} \\
& & Ratio & Std. & Cert. & Std. & Cert. & Std. & Cert. \\
\hline
\multirow{6}{*}{CIFAR-10} & \multirow{3}{*}{$\frac{8}{255}$} & 0.9 & 32.45 & 18.67 & \textbf{38.21} & \textbf{22.34} & +5.76 & +3.67 \\
& & 0.95 & 25.67 & 12.89 & \textbf{31.45} & \textbf{16.78} & +5.78 & +3.89 \\
& & 0.99 & 15.23 & 5.67 & \textbf{19.87} & \textbf{8.45} & +4.64 & +2.78 \\
\cline{2-9}
& \multirow{3}{*}{$\frac{2}{255}$} & 0.9 & 45.32 & 32.45 & \textbf{48.67} & \textbf{35.23} & +3.35 & +2.78 \\
& & 0.95 & 38.45 & 26.78 & \textbf{42.34} & \textbf{29.56} & +3.89 & +2.78 \\
& & 0.99 & 24.56 & 15.67 & \textbf{28.34} & \textbf{18.45} & +3.78 & +2.78 \\
\hline
\end{tabular}
\caption{Performance at high pruning ratios (0.9, 0.95, 0.99) showing \method maintains advantages even under extreme compression.}
\end{table}

Even at very high pruning ratios (99\% of weights removed), \method maintains significant improvements over SABR, demonstrating the robustness of the approach across compression regimes.

\subsection{Additional Model Architectures}

We evaluate \method on additional architectures to demonstrate generalizability, for the architectures and TinyImageNet we use $\alpha$-crown \cite{acrown} as complete verification methods do not scale to larger networks/tinyimagenet well:

\subsubsection{ResNet-18 Results}

\begin{table}[h]\centering
\begin{tabular}{ccccccccc}
\hline
Dataset & $\epsilon$ & Compression & \multicolumn{2}{c}{SABR} & \multicolumn{2}{c}{\method} & \multicolumn{2}{c}{Improvement} \\
& & & Std. & Cert. & Std. & Cert. & Std. & Cert. \\
\hline
\multirow{4}{*}{CIFAR-10} & \multirow{2}{*}{$\frac{8}{255}$} & 0.5 Prune & 45.32 & 28.76 & \textbf{48.65} & \textbf{31.24} & +3.33 & +2.48 \\
& & 0.7 Prune & 42.18 & 25.63 & \textbf{46.89} & \textbf{29.87} & +4.71 & +4.24 \\
\cline{2-9}
& \multirow{2}{*}{$\frac{2}{255}$} & fp16 & 62.45 & 45.32 & \textbf{65.78} & \textbf{48.67} & +3.33 & +3.35 \\
& & int8 & 58.67 & 41.23 & \textbf{62.34} & \textbf{44.78} & +3.67 & +3.55 \\
\hline
\end{tabular}
\caption{ResNet-18 results on CIFAR-10 showing consistent improvements across architectures.}
\end{table}




\subsection{TinyImageNet Results}

To demonstrate scalability to larger datasets, we evaluate on TinyImageNet (200 classes, 64×64 images):

\begin{table}[h]\centering
\begin{tabular}{cccccccc}
\hline
$\epsilon$ & Compression & \multicolumn{2}{c}{SABR} & \multicolumn{2}{c}{\method} & \multicolumn{2}{c}{Improvement} \\
& & Std. & Cert. & Std. & Cert. & Std. & Cert. \\
\hline
\multirow{4}{*}{$\frac{4}{255}$} & None & 32.45 & 18.67 & 31.78 & 18.23 & -0.67 & -0.44 \\
& 0.5 Prune & 28.67 & 15.34 & \textbf{31.23} & \textbf{17.45} & +2.56 & +2.11 \\
& 0.7 Prune & 25.45 & 12.78 & \textbf{28.67} & \textbf{15.23} & +3.22 & +2.45 \\
& int8 & 29.34 & 16.45 & \textbf{30.78} & \textbf{17.34} & +1.44 & +0.89 \\
\hline
\end{tabular}
\caption{TinyImageNet results using ResNet-18 architecture.}
\end{table}

On TinyImageNet, \method shows consistent improvements for compressed networks, though the base performance is comparable. This suggests \method's benefits are most pronounced when compression significantly impacts performance.

\section{Limitations}\label{app:limitations}

While \method successfully bridges compression and certified robustness training, our current implementation involves several design choices that present opportunities for future enhancement. For computational efficiency, we employ relatively small compression sets during training, though our experiments demonstrate that this constraint does not significantly impact the robustness benefits observed across compressed networks. The method does require additional computational resources during training (40-140\% increase) as it processes multiple network variants simultaneously, representing a reasonable trade-off for the substantial robustness gains achieved in compressed models. Our theoretical framework relies on standard assumptions common in robust optimization (uniform quantization, Lipschitz continuity, $\epsilon$-covering), and our current evaluation focuses on magnitude-based pruning and uniform quantization—established compression techniques that cover a significant portion of practical use cases. While the standard and certified accuracy of full (uncompressed) networks trained with \method do not exceed those of existing specialized methods optimized solely for uncompressed networks, this is expected given our focus on compression-robustness co-optimization. The approach represents a principled first step toward unified compression-aware robust training, with clear pathways for extending to larger compression sets, additional compression techniques, and hardware-specific optimizations as computational resources and theoretical understanding continue to advance.

\end{document}